\newtheorem{theorem}{Theorem}
\begin{document}
%
\title{Multi-Modal Coreference Resolution with the Correlation between Space Structures}
%
%

\author{Qibin~Zheng,
	Xingchun~Diao,
	Jianjun~Cao,
	Xiaolei~Zhou,
	Yi~Liu,
	and~Hongmei~Li
	\thanks{Q~Zheng, X~Diao, Y~Liu and H~Li are with Institute of Command and Control Engineering, Army Engineering University of PLA, Nanjing, China. E-mail: zqb1990@hotmail.com}
	\thanks{J~Cao and X~Zhou are with The 63rd Institute, National University of Defense Technology, Nanjing, China. E-mail: jianjuncao@yeah.net}
\thanks{}}

%
%

\markboth{}%
{Qibin \MakeLowercase{\textit{et al.}}: }
%



\maketitle

\begin{abstract}
Multi-modal data is becoming more common in big data background. Finding the semantically similar objects from different modality is one of the heart problems of multi-modal learning. Most of the current methods try to learn the inter-modal correlation with extrinsic supervised information, while intrinsic structural information of each modality is neglected. The performance of these methods heavily depends on the richness of training samples. However, obtaining the multi-modal training samples is still a labor and cost intensive work. In this paper, we bring a extrinsic correlation between the space structures of each modalities in coreference resolution. With this correlation, a semi-supervised learning model for multi-modal coreference resolution is proposed. We firstly extract high-level features of images and text, then compute the distances of each object from some reference points to build the space structure of each modality. With a shared reference point set, the space structures of each modality are correlated. We employ the correlation to build a commonly shared space that the semantic distance between multi-modal objects can be computed directly. The experiments on two multi-modal datasets show that our model performs better than the existing methods with insufficient training data.
\end{abstract}

\begin{IEEEkeywords}
multi-modal, semi-supervised learning, coreference resolution, space structure.
\end{IEEEkeywords}

%
\IEEEpeerreviewmaketitle

\section{Introduction}\label{intro}

\IEEEPARstart{R}{ecent} years have witnessed a surge of need in jointly analyzing of multi-modal data. As one of the fundamental problem of multi-modal learning, multi-modal coreference resolution aims to find the similar objects from different modality. It is widely used in such area as cross-modal retrieval, image captioning, cross-modal entity resolution. The problem is challenging because it requires detailed knowledge of different modalities and the correspondence between them\cite{Karpathy:2014}.

Most current approaches presume that there is a linear or non-linear projection between multi-modal data\cite{Rasiwasia:2010}\cite{Luo:2017}. These methods focus on how to utilize extrinsic supervised information to project one modality to the other or map both two modalities into a commonly shared space\cite{Rasiwasia:2010}\cite{Ngiam:2011}\cite{Socher:2013}\cite{Karpathy:2014}. The performance of these methods heavily depends on the richness of training samples. However, in real-world applications, obtaining the matched data from multiple modalities is costly and even impossible\cite{Gao:2018}. Therefor, it is urgently needed to develop a sample-insensitive methods for multi-modal coreference resolution.

When training samples are not rich enough, the intrinsic structure information of each modality can be helpful. In this paper specifically, the space structure of each modality is employed for multi-modal coreference resolution. For two set of cross-modal objects, although the representations are heterogeneous, the proximity relationship of them are similar. Taking the coreference resolution between images and textual descriptions as a example, Fig. \ref{fig:showcase} demonstrates the core idea of our method: given a set of images and one of corresponding textual descriptions, if two images(like two images of the bird) are similar to each other semantically, their corresponding descriptions should have similar meaning; otherwise, their meaning should be distinct. In other words, with proper distance metrics, the distance between images are similar to that between corresponding textual descriptions. Ignoring the scale difference, although the representation space of different modalities are heterogeneous, the space structure of them are similar. Hence, with a few matched pairs(the red points with same labels in the figure, called reference points), the space structure of different modalities can be anchored. 

Nevertheless, the raw feature spaces do not perform well for describing the space structure\cite{Rohrbach:2013}, higher-level semantic embedding spaces should be employed. Besides, it does not always hold that more matched objects bring higher resolution precision. It is significant to select reference points to describe space structure better.


\begin{figure*}
	
	\centering
	\includegraphics[width=10cm]{./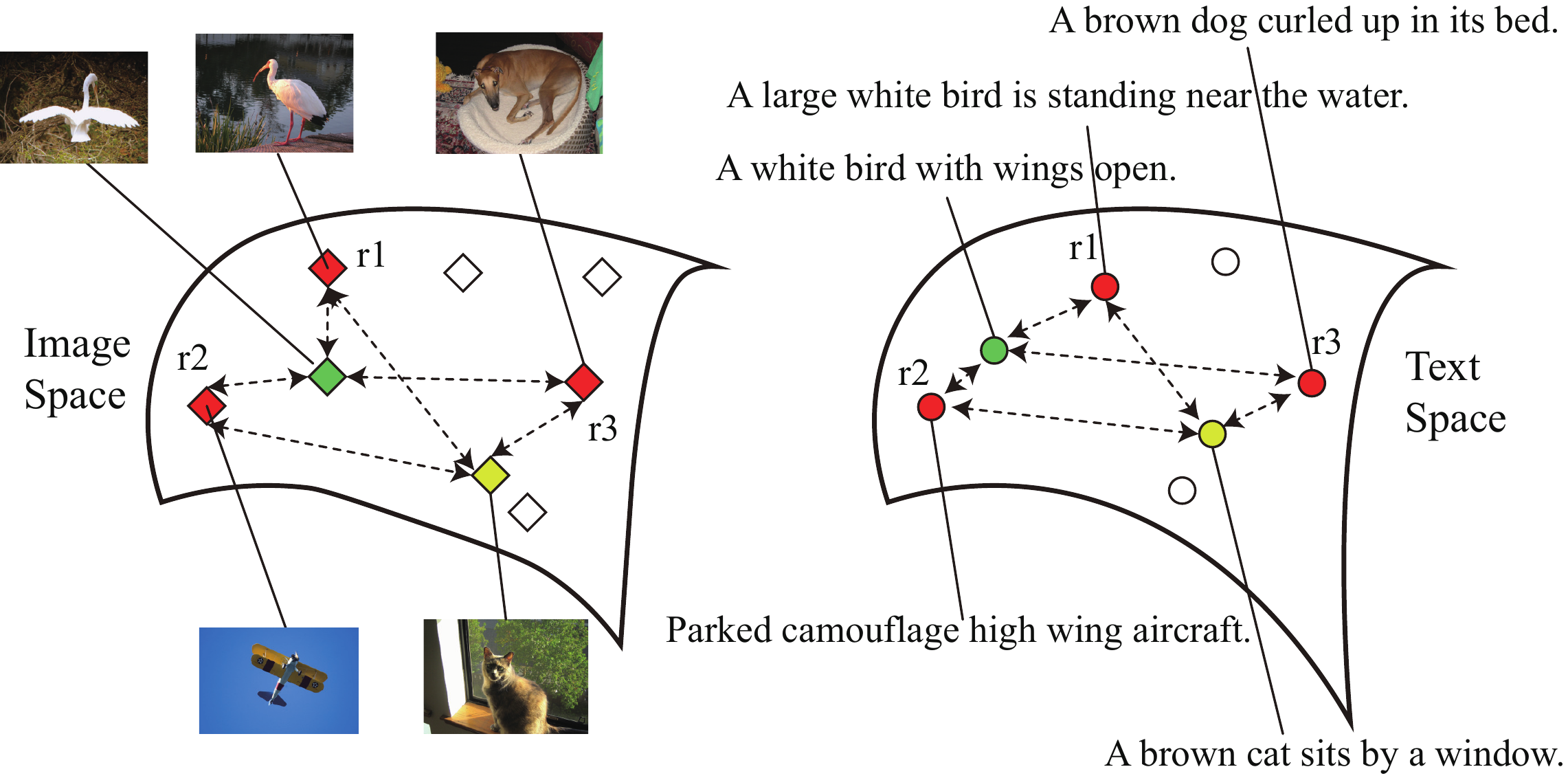}
	\caption{The core idea of proposed method}\label{fig:showcase}
\end{figure*}

%
%

Unlike previous works, we focus on digging the intrinsic correlation between modalities. In this paper, we bring intro-modal space structure information in resolving multi-modal coreference relation. The main contribution of this paper are listed as follows:

1) We investigate the intrinsic correlation between multi-modal data, and employ it to find semantically equivalent multi-modal objects. Different from similar works in zero-shot or few-shot learning\cite{Socher:2013b}\cite{Guo:2016}, we proved that the correlation between space structure widely exists among multi-modal dataset, rather than to learn it with a mount of training data every time.

2) To describe the space structure better, we utilize high level features to represent images and text, and an optimized strategy to select representative reference points.

3) Experiments on public data sets are carried to test the performance of our proposed methods comparing with state-of-the-art methods.

The paper is organized as follows. Section \ref{relatedwork} discusses the related work on multi-modal and cross-modal modeling. Section \ref{approach} introduces the proposed uniform representation of multi-modal data, and employ it for coreference resolution. Section \ref{experiment} tests the proposed method through the experiments on public datasets.
\section{Related work}\label{relatedwork}
According to the way of building the space, current methods for multi-modal coreference resolution can be divided into four types: 
\subsection{CCA-Based Methods}\label{CCA}
To the best of our knowledge, the first well-known cross-modal correlating model may be the CCA based model proposed by Hardoon et. al\cite{Hardoon:2004}. It learnt a linear projection to maximize the correlation between the representation of different modality in the projected space. Inspired by this work, many CCA based models are designed for cross-modal analyzing\cite{Rasiwasia:2010}\cite{Sharma:2012}\cite{Mroueh:2015}\cite{Wang:2017}. Rasiwasia et al.\cite{Rasiwasia:2010} utilized CCA to learn two maximally correlated subspaces, and multiclass logistic regression was performed within them to produce the semantic spaces respectively. Mroueh et al.\cite{Mroueh:2015} proposed a Struncated-SVD based algorithms to compute the full regularization path of CCA for multi-modal retrieval efficiently. Wang et al.\cite{Wang:2017} developed a new hypergraph-based Canonical Correlation Analysis(HCCA) to project low-level features into a shard space where intra-pair and inter-pair correlation be maintained simultaneously. Heterogeneous high-order relationship was used to discover the structure of cross-modal data.
\subsection{Deep Learning Methods}
Due to the strong learning ability of deep neural network, many deep model have been proposed for multi-modal analysis, such as\cite{Ngiam:2011}\cite{Srivastava:2012}\cite{Andrew:2013}\cite{Karpathy:2014}\cite{Frome:2013}\cite{Jiang:2017}\cite{Wei:2017}. Ngiam et al.\cite{Ngiam:2011} presented an auto-encoder model to learn joint representations for speech audios and videos of the lip movements. Srivastava and Salakhutdinov\cite{Srivastava:2012} employed restricted Boltzmann machine to learn a shared space between data of different modalities. Frome et al.\cite{Frome:2013} proposed a deep visual-semantic embedding(DeViSE) model to identify the visual objects using the information from labeled image and unannotated text. Andrew et al.\cite{Andrew:2013} introduced Deep Canonical Correlation Analysis to learn such nonlinear mapping between two views of data that the corresponding objects are linearly related in the representation space. Jiang et al.\cite{Jiang:2017} proposed a real time Internet cross-media retrieval method, in which deep learning was employed for feature extraction and distance detection. Due to the powerful representing ability of the convolutional neural network visual feature, Wei et al.\cite{Wei:2017} employed it coupled with a deep semantic matching method for cross-modal retrieval.

\subsection{Topic Model Methods}
Topic model is also helpful for uniform representing of multi-modal data, assuming that objects of different modality share some latent topics. Latent Dirichlet Allocation(LDA) based methods establish the shared space through the joint distribution of multi-modal data and the conditional relation between them\cite{Jia:2011}\cite{Roller:2013}. Roller and Walde\cite{Roller:2013} integrated visual features into LDA, and presented a multi-modal LDA model to learn joint representations for textual and visual data. Wang et al.\cite{Wang:2014} proposed multimodal mutual topic reinforce model(M$^3$R) to discover mutual consistent topics.

\subsection{Hashing-Based Methods}
For the rapid growth of data volume, the cost of finding nearest neighbor cannot be dismissed. Hashing is a scalable method for finding nearest neighbors approximately\cite{Wang:2010}. It projects data into Hamming space, where the neighbor search can be performed efficiently. To improve the efficient of finding similar multi-modal objects, many cross-modal hashing methods have been proposed\cite{Wang:2010}\cite{Kumar:2011}\cite{Zhen:2012}\cite{Ou:2013}\cite{Wu:2014}. Kumar and Udupa\cite{Kumar:2011} proposed a cross view hashing method to generate such hash codes that minimized the distance in Hamming space between similar objects and maximized that between dissimilar ones. Zhen et al.\cite{Zhen:2012} used a co-regularization framework to generate such binary code that the hash codes from different modality were consistent. Ou et al.\cite{Ou:2013} constructed a Hamming space for each modality and build the mapping between them with logistic regression. Wu et al.\cite{Wu:2014} proposed a sparse multi-modal hashing method for cross-modal retrieval.   

Besides these methods above, there still are other models proposed for multi-modal problems. Chen et al.\cite{Chen:2012} employed voxel-based multi-modal partial least square(PLS) to analyze the correlations between FDG PET glucose uptake-MRI gray matter volume scores and apolipoprotein E epsilon 4 gene dose in cognitively normal adults. 

Although these methods have achieved great success in multi-modal learning, most of them need a mass of training data to learn the complex correlation between objects from different modality. To reduce the demand of training data, Gao et al.\cite{Gao:2018} proposed an active similarity learning model for cross-modal data. Nevertheless, without extra information, the improvement is limited.

\begin{figure*}
	\centering
	\includegraphics[width=10cm]{./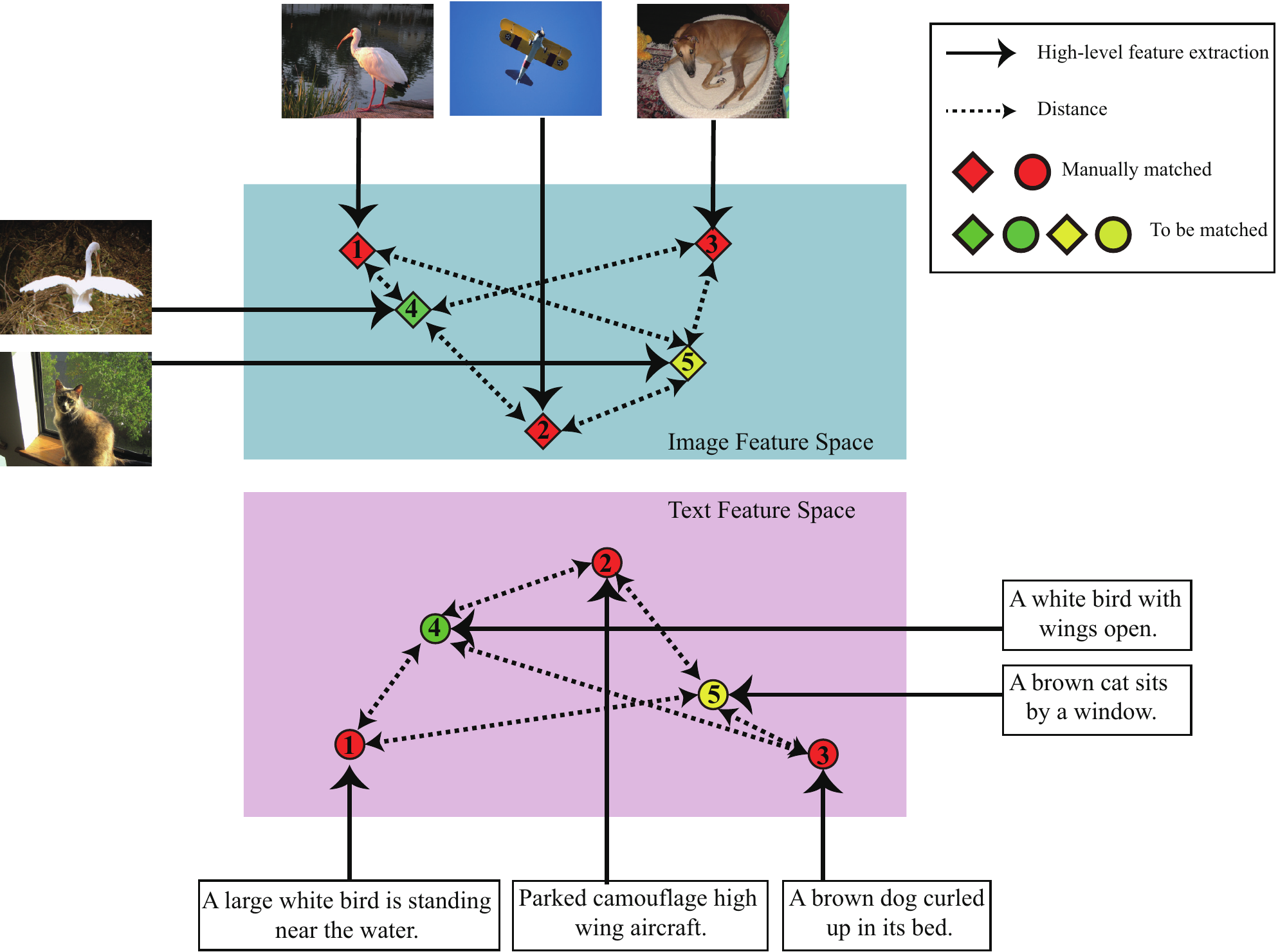}
	\caption{Overview of our multi-modal coreference resolution model.}\label{fig:framework}
\end{figure*}
\section{Proposed Approach}\label{approach}
To perform the coreference resolution with insufficient matched samples, we take the similarity between space structures into consideration. Given two datasets $X=\{x_1, x_2, \cdots, x_n\}\in \mathbb{R}^{n\times d}$ and $Y=\{y_1, y_2, \cdots, y_m\}\in \mathbb{R}^{m\times e}$ from two modalities, Fig \ref{fig:framework} shows the overview of our proposed model, the red points refer to the matched objects in two modality, the solid lines refer to semantic embedding, the dotted lines mean the distance between two objects. $X$ and $Y$ are firstly mapped into semantic spaces(the blue and pink area in the figure) respectively as in \ref{semantic}: 
\begin{equation}\label{eq:m1}
\mathscr{M}_1:X \rightarrow \mathcal{X},
\end{equation}
and
\begin{equation}\label{eq:m2}
\mathscr{M}_2:Y \rightarrow \mathcal{Y}.
\end{equation}
Secondly, represent the to-be-matched objects(the non-red points in Fig \ref{fig:framework}) of $\mathcal{X}$ and $\mathcal{Y}$(the red ones in the figure) with the space structure based scheme in Section \ref{ssr}:
\begin{equation}\label{eq:m3}
\mathscr{M}_3:\mathcal{X}\rightarrow \Re ^\mathcal{X},
\end{equation}
and
\begin{equation}\label{eq:m4}
\mathscr{M}_4:\mathcal{Y}\rightarrow \Re ^\mathcal{Y}.
\end{equation}

Since $\Re ^\mathcal{X}$ and $\Re ^\mathcal{Y}$ are positively correlated(proved in Section \ref{theory}), isomorphic and homogeneous, they can be linearly projected into each other:
\begin{equation}
\label{eq:m5}
\mathscr{M}_5:\Re ^\mathcal{X} \leftrightarrow  \Re ^\mathcal{Y}.
\end{equation}
In $\Re ^\mathcal{XY}$, cross-modal similarities are measured with cosine distance. With the similarity measurement, multi-modal coreference resolution can be solved by traditional methods.  

\subsection{High level-feature extraction for text and images}\label{semantic}
In general, different low-level representations are adopted for different modalities, and there is no explicit correspondence between them\cite{Costa:2014}\cite{Huang:2017}. Similar to the hypothesis fo zero-shot learning\cite{Socher:2013b}, we believe that the space structures of different modalities are similar, and try to connect modalities with them. It is crucial whether the pair-wise distances over the representation can reflect the semantic similarity properly.

The distances over raw or low-level features are always far from the real semantic ones. There are many works available to achieve better representations for various of data,  like\cite{Karpathy:2014}\cite{Socher:2013}\cite{Wei:2017}\cite{Frome:2013}. In this paper, we utilize the methods below for feature exacting of text and image specifically:

\subsubsection{Smooth inverse frequency for text embedding}
Smooth inverse frequency(SIF) is a simple but powerful method for measuring textual similarity\cite{Arora:2017}. It achieves great performance on a variety of textual similarity tasks, and even beats some sophisticated supervised methods including RNN and LSTM. In addition, the method is very suitable for domain adaptive setting. The core idea of SIF method is surprisingly simple: compute the weighted average of the word vectors and remove the first principle projection of the average vectors. For the superiority over performance, simplicity and domain adaption of SIF methods, it is utilized for text embedding in this work. 

As in \cite{Arora:2017}, we employ SIF methods on 300-dimensional GloVe word vectors trained on 840 billion token Common Crawl corpus for textual semantic embedding, i.e., the mapping in Equation(\ref{eq:m1}).

\subsubsection{Convolutional neural network off-the-shelf for image embedding}
Convolutional neural network(CNN) has demonstrated outstanding ability for machine learning tasks of images, like image classification and object detection. Wei et. al proposed to utilize CNN visual features for cross-modal retrieval\cite{Wei:2017}, which performs better than commonly used features and achieves a baseline for cross-modal retrieval. Firstly, off-the-shelf CNN features\cite{Donahue:2014} are extracted from the CNN model pretrained on ImageNet; then a fine-tuning step is processed for the target dataset.

In this paper, for better semantic representation, we extracted CNN features from images with the method above, namely, the mapping in Equation(\ref{eq:m2}). 

\subsection{Representation scheme of the space structure of single modality}\label{ssr}
Although multi-modal data are highly heterogeneous, most of them are represented in Euclidean or Hamming space. In this section we modify the representation scheme of space structure in\cite{Qian:2016}, and employ it to represent the objects from Euclidean and Hamming space.
\begin{figure*}
	\centering
	\begin{tabular}{cc}
		\fbox{\includegraphics[width=5cm]{./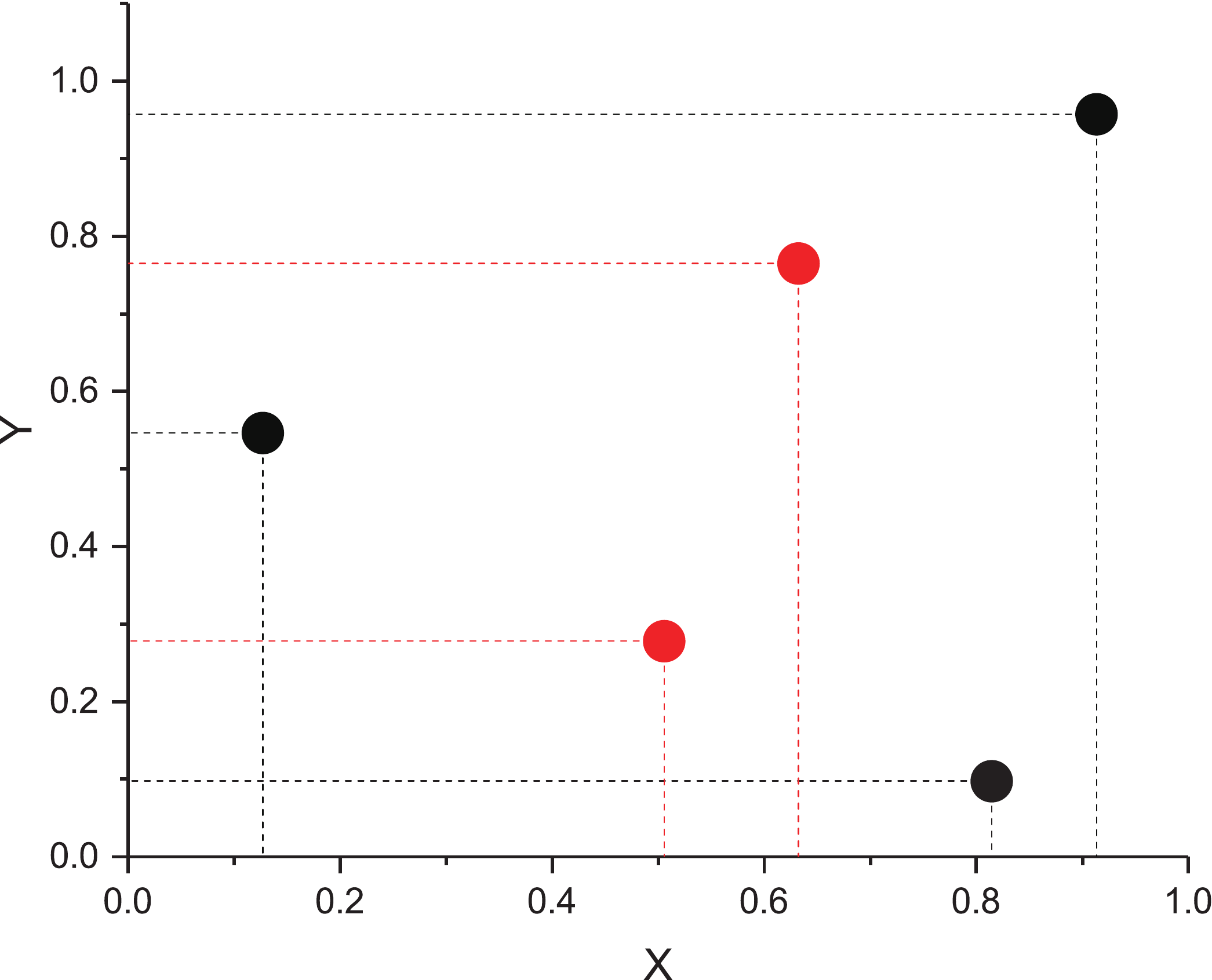}}&
		\fbox{\includegraphics[width=5cm]{./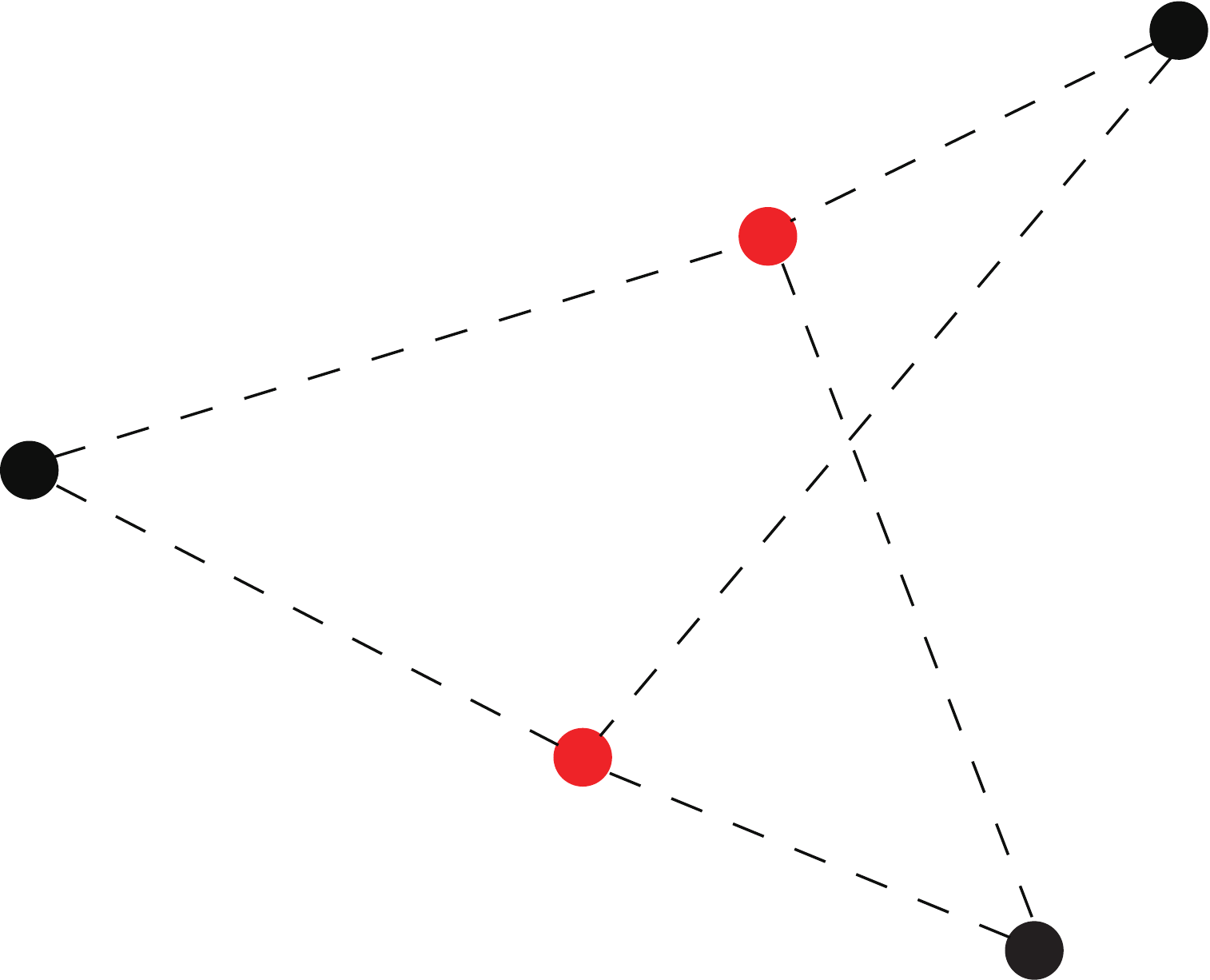}}\\
		(a)&(b)
	\end{tabular}
	\caption{The position of five points in (a) can be described by the distances to the reference points as in (b). }\label{fig:reference}
\end{figure*}
Qian et al. proposed a space structure based representation scheme for clustering of categorical data\cite{Qian:2016}. The space structure is figured with the pair-wise similarity: given a categorical object $c_i$, it is re-represented as $r_i^{c}=\{r_{i1}^{c}, r_{i2}^{c}, \cdots, r_{ij}^{c},\cdots, r_{in}^{c}\}$, where $r_{ij}$ is the similarity between $c_i$ and $c_j$, $n$ is the number of the categorical dataset $O$. Qian et al. utilized all the members in the dataset as the reference system to represent categorical data, however, it is unnecessary and inefficient to do so in Euclidean or Hamming space.

In Euclidean space, an object is represented as its distances from the coordinate axes (as Fig. \ref{fig:reference} (a)). However, it can also be positioned by its Euclidean distance from several given points(called reference points). For example, in Fig. \ref{fig:reference} (b), the red points can be uniquely positioned by their distances from the black ones. In fact, for a $m$-dimensional object in Euclidean space, $m+1$ non-colinear reference points are enough to locate it. Based on the representation scheme, a set of numeric object $\mathcal{X}=\{x_1, x_2, \cdots, x_n\}$ can be mapped into $\Re ^\mathcal{X}=\{r_{1}^{x}, r_{2}^{x}, \cdots, r_{n}^{x} \}$ with a reference point set $O^\mathcal{X}=\{o_1^{x}, o_2^{x}, \cdots, o_j^{x}, \cdots, o_k^{x}\} \subset \mathcal{X}$ ($m+1\le k=|O_\mathcal{X}| \ll |\mathcal{X}|$), where
\begin{equation}
r_{i}^{x}=(r_{i1}^{x}, r_{i2}^{x},\cdots, r_{ij}^{x}, \cdots, r_{ik}^{x}),
\end{equation}

\begin{equation}
\label{euclidean}
r_{ij}^{x}=\sqrt{\sum_{l=1}^m{\left( x_{il}-o_{jl}^{x} \right) ^2}}.
\end{equation}

Similarly, a binary object in Hamming space can be also represented by its Hamming distances from the reference points. For a $m$-dimensional object, $m$ disparate ones as reference points are enough to figure out its position in Hamming space. Given a reference point set $O^\mathcal{Y}=\{o_1^{y}, o_2^{y}, \cdots, o_j^{y}, \cdots, o_k^{y}\} \subset \mathcal{X}$ ($m\le k=|O_\mathcal{Y}| \ll |\mathcal{Y}|$), a set of binary object $\mathcal{Y}=\{y_1, y_2, \cdots, y_n\}$ in Euclidean space is mapped into $\Re ^\mathcal{Y}=\{r_{1}^{y}, r_{2}^{y}, \cdots, r_{n}^{y} \}$ where
\begin{equation}
r_{i}^{y}=(r_{i1}^{y}, r_{i2}^{y},\cdots, r_{ij}^{y}, \cdots, r_{ik}^{y}),
\end{equation}
 
\begin{equation}
\label{hamming}
r_{ij}^{y}=\sum_{l=1}^m{\left( y_{il}\oplus o_{jl}^{y} \right) },
\end{equation}
$\oplus$ is xor operator.

\begin{figure}[htbp]
	\centering
	\includegraphics[width=8cm]{./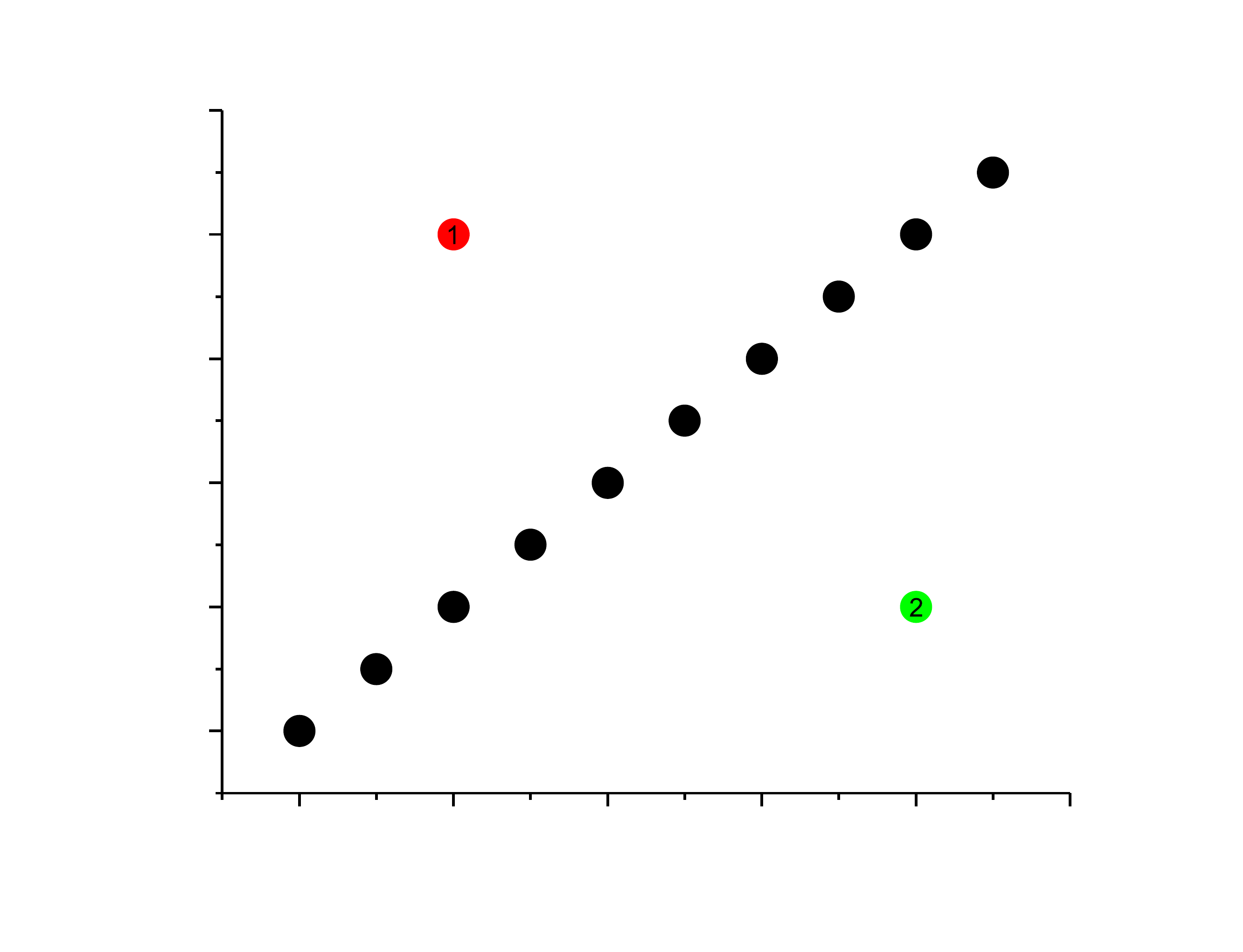}
	\caption{Reference points}\label{fig:referencePoint}
\end{figure}

Although more matched multi-modal objects would benefit coreference resolution better intuitively, some pairs of objects may undermine the representing ability of the space structure. As Fig \ref{fig:referencePoint} illustrates, the red and green points being taken as reference points are definitely better than the black ones. On the one hand, A good reference set $R$ should adequately reflect the distribution of the dataset, then the members of reference set should be quite distinct with each other. On the other hand, for higher  discrimination, their distances from the non-reference objects should be distinct either. Thus, we transfer the selection of reference points to an optimization problem:
\begin{equation}
\label{eq:opt}
\max\text{\,\,}L\left( R|k \right) =\frac{\sum_{i\ne j}^k{d\left( o_i,o_j \right)}}{\left( \sum_{i=1}^k{\sigma \left( o_i \right)} \right) ^{-\lambda}},
\end{equation}
where $k\ge m+1$ is the number of reference points(user-specified), $\sigma(o_i)$ is the variance of the distances of $o_i$ from all non-reference objects, $\lambda > 0$ is a balance factor. 
 

Being represented with the scheme above, the similarity between numerical objects $x_i$ and $x_j$(or binary objects $y_i$ and $y_j$) can be directly measured with Euclidean or cosine distance between them as\cite{Qian:2016}: 

\begin{equation}
\label{eq:eucilidean}
dist(r_i,r_j)=\sqrt{\sum_l^k{\left( r_{il}-r_{jl} \right) ^2}},
\end{equation}
or
\begin{equation}
\label{eq:hamming}
dist(r_i,r_j)=1-\frac{r_i\cdot r_{j}^{T}}{\lVert r_i \rVert \times \lVert r_j \rVert}.
\end{equation}


\subsection{Correlation between the space structure of different modalities}\label{theory}
Although the feature spaces of multi-modal data are usually highly heterogeneous, they may share some latent properties. In this part, we prove that the pair-wise similarities of different modalities are correlated to each other. Based on this, these modalities share similar space structures. 

As discussed in section \ref{CCA}, CCA based methods aim to discover the linear correlation between multi-modal data.  Firstly, we assume that $\mathcal{X}$ and $\mathcal{Y}$ are linearly correlated as
\begin{equation}
\label{linearC}
\mathcal{Y}=\mathcal{X}M
\end{equation}
where $M \in \mathbb{R}^{d\times e}$ is a projection matrix. Let $S_\mathcal{X}(i,j)$ be the similarity between $x_i$ and $x_j$, $S_\mathcal{Y}(i,j)$ be that between $y_i$ and $y_j$. We have 
\begin{theorem}
	\label{theorem1}
	If multi-modal data $\mathcal{X}$ and $\mathcal{Y}$ are linearly correlated as in Equation (\ref{linearC}), there exists a positive correlation between $S_\mathcal{X}(i,j)$ and $S_\mathcal{Y}(i,j) (i,j=1,2,\cdots,n)$.
\end{theorem}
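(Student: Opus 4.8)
The plan is to reduce the statement about similarities to a statement about pairwise distances, and then exploit the positive semi-definiteness of $MM^T$. Since every similarity measure used in Section \ref{ssr} is a monotone-decreasing function of a pairwise distance (Euclidean as in (\ref{eq:eucilidean}) or cosine as in (\ref{eq:hamming})), it suffices to prove that the squared distances $d_\mathcal{X}(i,j)^2 = \lVert x_i - x_j \rVert^2$ and $d_\mathcal{Y}(i,j)^2 = \lVert y_i - y_j \rVert^2$ are positively correlated; the corresponding similarities $S_\mathcal{X}(i,j)$ and $S_\mathcal{Y}(i,j)$, being decreasing transforms of monotonically related quantities, then inherit the same sign of correlation.

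First I would substitute the linear relation (\ref{linearC}). Writing $\delta_{ij} = x_i - x_j$, the assumption $\mathcal{Y} = \mathcal{X}M$ gives $y_i - y_j = \delta_{ij}M$, so that
\begin{equation}
\label{eq:proofdist}
d_\mathcal{Y}(i,j)^2 = \delta_{ij} M M^T \delta_{ij}^T, \qquad d_\mathcal{X}(i,j)^2 = \delta_{ij} \delta_{ij}^T .
\end{equation}
The matrix $A = MM^T \in \mathbb{R}^{d\times d}$ is symmetric positive semi-definite, hence admits an orthonormal eigendecomposition $A = U\Lambda U^T$ with $\Lambda = \mathrm{diag}(\lambda_1,\dots,\lambda_d)$ and $\lambda_k \ge 0$. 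Setting $w_{ij} = \delta_{ij} U$ and using $UU^T = I$, equation (\ref{eq:proofdist}) becomes
\begin{equation}
\label{eq:proofdiag}
d_\mathcal{X}(i,j)^2 = \sum_{k=1}^{d} w_{ijk}^2, \qquad d_\mathcal{Y}(i,j)^2 = \sum_{k=1}^{d} \lambda_k\, w_{ijk}^2 .
\end{equation}
Both quantities are therefore nonnegative combinations of the \emph{same} coordinates $w_{ijk}^2$, which already yields the sandwich $\lambda_{\min}\, d_\mathcal{X}(i,j)^2 \le d_\mathcal{Y}(i,j)^2 \le \lambda_{\max}\, d_\mathcal{X}(i,j)^2$, i.e. $d_\mathcal{Y}(i,j)^2 = \bar{\lambda}_{ij}\, d_\mathcal{X}(i,j)^2$ with an effective gain $\bar{\lambda}_{ij}$ that is a weighted average of the eigenvalues lying in $[\lambda_{\min},\lambda_{\max}]$.

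To turn this sandwich into an honest positive correlation, I would treat the index pairs $(i,j)$ as a sample and compute $\mathrm{Cov}\bigl(d_\mathcal{X}^2, d_\mathcal{Y}^2\bigr) = \sum_{k,l} \lambda_l\, \mathrm{Cov}\bigl(w_{ijk}^2, w_{ijl}^2\bigr)$. Under the natural isotropy assumption that the difference directions $\delta_{ij}/\lVert \delta_{ij}\rVert$ are distributed independently of their magnitudes, the rotated coordinates $w_{ijk}$ are uncorrelated across $k$, the cross terms vanish, and $\mathrm{Cov}\bigl(d_\mathcal{X}^2, d_\mathcal{Y}^2\bigr) = \sum_k \lambda_k\, \mathrm{Var}\bigl(w_{ijk}^2\bigr) \ge 0$, strictly positive whenever $M \ne 0$. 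Equivalently, the conditional mean $\mathbb{E}\bigl[d_\mathcal{Y}^2 \mid d_\mathcal{X}^2 = t\bigr] = \bigl(\mathrm{tr}(MM^T)/d\bigr)\, t$ is increasing in $t$, which is exactly the asserted positive correlation.

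The hard part will be this last step. The sandwich bounds in (\ref{eq:proofdiag}) do not by themselves force a positive correlation, because the gain $\bar{\lambda}_{ij}$ varies from pair to pair and could in principle be negatively associated with $d_\mathcal{X}(i,j)^2$; the whole content of the theorem is controlling $\mathrm{Cov}\bigl(\bar{\lambda}_{ij}, d_\mathcal{X}^2\bigr)$. The cleanest route is the isotropy/independence-of-direction assumption above, which decouples magnitude from direction and makes the cross-covariances disappear. If one is unwilling to assume isotropy, the residual technical work is to verify $\mathrm{Cov}\bigl(w_{ijk}^2, w_{ijl}^2\bigr) \ge 0$ for the empirical distribution of differences, which still suffices for a nonnegative covariance and hence for the claimed positive correlation.
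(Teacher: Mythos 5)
Your proposal shares the paper's key algebraic ingredient --- diagonalize $MM^T$ and reduce everything to the positivity of $\sum_k \lambda_k = \mathrm{tr}(MM^T)$ --- but it routes the argument through squared pairwise distances over the empirical distribution of index pairs, whereas the paper fixes a pair $(i,j)$, takes the similarity to be the inner product $S_\mathcal{X}(i,j)=x_i x_j^T$, $S_\mathcal{Y}(i,j)=x_i MM^T x_j^T$ as in Equations (\ref{simX})--(\ref{simY}), assumes the coordinates $x_{i\varphi}$ are i.i.d.\ $N(0,1)$, and computes the Pearson coefficient directly, getting $\mathrm{Cov}(S_\mathcal{X},S_\mathcal{Y})=\sum_\gamma\lambda_\gamma>0$. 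The difference matters, because the step on which your version hinges is false as stated. You claim that under isotropy ``the rotated coordinates $w_{ijk}$ are uncorrelated across $k$, the cross terms vanish,'' but the cross terms are $\mathrm{Cov}(w_{ijk}^2,w_{ijl}^2)$, and uncorrelatedness of the $w_{ijk}$ does not give uncorrelatedness of their squares. Take exactly your isotropy model $\delta_{ij}=R\,u$ with $u$ uniform on the unit sphere, independent of $R$, and $R$ constant: then $E[u_k^2u_l^2]=\tfrac{1}{d(d+2)}<\tfrac{1}{d^2}=E[u_k^2]E[u_l^2]$, so the squared coordinates are \emph{negatively} correlated (they must be --- they sum to $1$), each $\mathrm{Var}(w_{ijk}^2)>0$, yet $d_\mathcal{X}^2=R^2$ is constant and the true covariance is $0$ with the correlation undefined. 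Your formula $\mathrm{Cov}(d_\mathcal{X}^2,d_\mathcal{Y}^2)=\sum_k\lambda_k\mathrm{Var}(w_{ijk}^2)>0$ therefore cannot be right; the negative cross terms exactly cancel the diagonal ones here. What actually makes the cross terms vanish is independence of the coordinates themselves (so that their squares are also independent), which is precisely the i.i.d.\ Gaussian hypothesis the paper imposes; rotation invariance then preserves that independence after multiplying by $U$. Your fallback condition $\mathrm{Cov}(w_{ijk}^2,w_{ijl}^2)\ge 0$ for the empirical distribution is likewise typically violated whenever the difference norms are comparable, so it does not rescue the argument.

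The opening reduction is also not sound for the quantity the theorem is about. If $S_\mathcal{X}=f(d_\mathcal{X})$ and $S_\mathcal{Y}=g(d_\mathcal{Y})$ with $f,g$ decreasing, a positive Pearson correlation between $d_\mathcal{X}^2$ and $d_\mathcal{Y}^2$ does not transfer to $S_\mathcal{X}$ and $S_\mathcal{Y}$: monotone transformations preserve the sign of rank correlations, not of the Pearson coefficient. (Moreover the cosine measure of Section \ref{ssr} is not a monotone function of Euclidean distance without normalization.) The paper avoids both issues by \emph{defining} the similarity as the inner product and correlating those quantities directly under the Gaussian assumption. The cleanest repair of your proof is to adopt that setting: with i.i.d.\ $N(0,1)$ coordinates your decomposition goes through verbatim, the rotated coordinates are genuinely independent, the cross terms are genuinely zero, and the covariance equals a positive multiple of $\mathrm{tr}(MM^T)$. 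Alternatively, restate the conclusion for a concordance-type measure of association, for which your monotonicity reduction is legitimate.
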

\begin{proof}
	For simplicity, we assume all $x_{i,\varphi }$($i=1, 2, \cdots, n~and~\varphi =1, 2, \cdots, d$) subject to a standard normal distribution $N\left( 0,1 \right)$ and are independent to each other. The inner product in Equation(\ref{simX}) and (\ref{simY}) are utilized to compute the similarity matrix $S_\mathcal{X}(i,j)$ and $S_\mathcal{Y}(i,j)$).
	\begin{equation}
	\label{simX}
	S_\mathcal{X}(i,j)=x_i x_{j}^T
	\end{equation}
	\begin{equation}
	\label{simY}
	S_\mathcal{Y}(i,j)=y_i y_{j}^T=x_i M M^T x_{j}^T
	\end{equation}
	Since $MM^T$ is symmetric, then it can be diagonalized as 
	\begin{equation}
	S_\mathcal{Y}(i,j)=y_i y_{j}^T=x_i P \varLambda P^T x_{j}^T.
	\end{equation}
	The Pearson correlation coefficient between $S_\mathcal{X}(i,j)$ and $S_\mathcal{Y}(i,j)$ is
	
	\begin{equation}
	\label{rho}
	\rho _{S_\mathcal{X}S_\mathcal{Y}}=\frac{\text{Cov}\left( S_\mathcal{X}\left( i,j \right) ,S_\mathcal{Y}\left( i,j \right) \right)}{\sqrt{D\left( S_\mathcal{X}\left( i,j \right) \right) \cdot D\left( S_\mathcal{Y}\left( i,j \right) \right)}},
	\end{equation}
	where the variance of $S_\mathcal{X}(i,j)$ and $S_\mathcal{Y}(i,j)$ is 
	\begin{equation}
	\label{eq:dx}
	\begin{aligned}
	D\left( S_\mathcal{X}\left( i,j \right) \right) =D\left( x_ix_{j}^{T} \right) 
	=D\left( \sum_{\varphi =1}^d{x_{i\varphi}x_{j\varphi}} \right)\\
	=\sum_{\varphi =1}^d{D\left( x_{i\varphi}x_{j\varphi} \right) =d}
	\end{aligned}~~,
	\end{equation}
	and
	\begin{equation}
	\label{eq:dy}
	\begin{aligned}
	D\left( S_\mathcal{Y}\left( i,j \right) \right) &=D\left( y_iy_{j}^{T} \right) \\
	&=D\left( x_iP\varLambda P^Tx_{j}^{_T} \right)\\ 
	&=D\left( \sum_{\gamma =1}^d{\lambda _{\gamma}\left( x_ip_{\gamma}^{T} \right) \left( x_jp_{\gamma}^{T} \right)} \right)\\
	&=D\left( \sum_{\mu =1,\nu =1}^d{x_{i\mu}x_{j\nu}\sum_{\gamma =1}^d{\lambda _{\gamma}p_{\gamma \mu  }^Tp_{\gamma \nu}^T}} \right)
	\end{aligned}.
	\end{equation}
	
	Since 
	\begin{equation}
	\begin{aligned}
	Cov\left( x_{i\mu}x_{j\nu},x_{iu}x_{jv} \right)&=E\left( x_{i\mu}x_{j\nu}x_{iu}x_{jv} \right)\\
	&-E\left( x_{i\mu}x_{j\nu} \right) E\left( x_{iu}x_{jv} \right)\\ 
	&=E\left( x_{i\mu}x_{iu} \right) E\left( x_{j\nu}x_{jv} \right)\\
	&=0
	\end{aligned}~~,
	\end{equation}
	$x_{i\mu}x_{j\nu}$ and $x_{iu}x_{jv}$ are dependent with each other if $\mu \ne u$ or $\nu \ne v$, then
	\begin{equation}
	\label{eq:dy2}
	\begin{aligned}
	D\left( S_\mathcal{Y}\left( i,j \right) \right)&=\sum_{\mu =1,\nu =1}^d{\left( \sum_{\gamma =1}^d{\lambda _{\gamma}p_{\gamma \mu}^Tp_{\gamma \nu}}^T \right) ^2D\left( x_{iu}x_{jv} \right)}\\
	&=\sum_{\mu =1,\nu =1}^d{\left( \sum_{\gamma =1}^d{\lambda _{\gamma}p_{\gamma \mu}^Tp_{\gamma \nu}^T} \right) ^2}
	\end{aligned}~~.
	\end{equation}
	The covariance of $S_\mathcal{X}(i,j)$ and $S_\mathcal{Y}(i,j)$ is	\begin{equation}
	\label{eq:cov1}
	\begin{aligned}
	\text{Cov}\left( S_\mathcal{X}\left( i,j \right) ,S_\mathcal{Y}\left( i,j \right) \right) =\text{Cov}\left( x_ix_{j}^{T},x_iP\varLambda P^Tx_{j}^{T} \right)\\
	=\text{Cov}\left( x_ix_{j}^{T},\sum_{\gamma =1}^d{\lambda _{\gamma}\left( x_ip_{\gamma}^{T} \right) \left( x_jp_{\gamma}^{T} \right)} \right) \\
	=\sum_{\gamma =1}^d{\lambda _{\gamma}\text{Cov}\left( x_ix_{j}^{T},\left( x_ip_{\gamma}^{T} \right) \left( x_jp_{\gamma}^{T} \right) \right)}\\
	=\sum_{\gamma =1}^d{\lambda _{\gamma}\text{Cov}\left( \sum_{\varphi =1}^d{x_{i\varphi}x_{j\varphi}},\sum_{\mu =1\text{,}\nu =1}^d{p_{\gamma \mu}^Tp_{\gamma \nu}^Tx_{i\mu}x_{j\nu}} \right)}
	\end{aligned}.
	\end{equation}
	Since $x_{i,l}$ is independent of each other, then
	\begin{equation}
	Cov\left( \sum_{\varphi =1}^d{x_{i\varphi}x_{j\varphi}},\sum_{\mu \ne \nu}^d{p_{\gamma \mu}^Tp_{\gamma \nu}^Tx_{i\mu}x_{j\nu}} \right) =0,
	\end{equation}
	then
	\begin{equation}
	\begin{aligned}
	\text{Cov}\left( \sum_{\varphi =1}^d{x_{i\varphi}x_{j\varphi}},\sum_{\mu =1\text{,}\nu =1}^d{p_{\gamma \mu}^Tp_{\gamma \nu}^Tx_{i\mu}x_{j\nu}} \right)\\
	=\sum_{\varphi =1}^d{(p_{\gamma \varphi}^T)^{2}}\text{Cov}\left( x_{i\varphi}x_{j\varphi},x_{i\varphi}x_{j\varphi} \right) =\sum_{\varphi =1}^d{(p_{\gamma \varphi}^T)^2} 
	\end{aligned}.
	\end{equation}
	Finally,
	\begin{equation}
	\label{eq:cov}
	\text{Cov}\left( S_\mathcal{X}\left( i,j \right) ,S_\mathcal{Y}\left( i,j \right) \right) =\sum_{\gamma =1}^d{\lambda _{\gamma}\sum_{\varphi =1}^d{(p_{\gamma \varphi}^T)^2}}.
	\end{equation}
	
	From Equation (\ref{rho}), (\ref{eq:dx}), (\ref{eq:dy}) and (\ref{eq:cov}), the correlation coefficient is  
	\begin{equation}
	\rho _{S_\mathcal{X}S_\mathcal{Y}}=\frac{\sum_{\gamma =1}^d{\lambda _{\gamma}\sum_{\varphi =1}^d{(p_{\gamma \varphi}^T)^2}}}{\sqrt{d\sum_{\mu =1,\nu =1}^d{\left( \sum_{\gamma =1}^d{\lambda _{\gamma}p_{\gamma \mu}^Tp_{\gamma \nu}^T} \right) ^2}}}.
	\end{equation}
	Because $MM^T$ is a non-negative symmetric matrix, we have Equation(\ref{lamb}) unless it is a zero matrix either, which is unreasonable obviously.
	\begin{equation}
	\label{lamb}
	\sum_{\gamma =1}^d{\lambda _{\gamma}}=\sum_{\gamma =1}^d{m_{\gamma \gamma}}>0
	\end{equation}
	In Equation(\ref{lamb}), $mm_{\gamma\gamma}$ refer to the the principal diagonal elements of $MM^T$. In conclusion, there exist a positive correlation between $S_\mathcal{X}(i,j)$ and $S_\mathcal{Y}(i,j)$.
\end{proof}

In Theorem \ref{theorem1} we assume that $\mathcal{X}$ is linearly correlated to $\mathcal{Y}$, however, if the correlation is nonlinear the conclusion also hold. Similar to \cite{Luo:2017}, we define a nonlinear mapping
\begin{equation}
\label{nonlinearC}
\mathcal{Y}=\sigma \left( \mathcal{X}M + B\right),
\end{equation}
where $M$ is also a linear projection matrix, $B$ is a bias matrix, $\sigma$ is specified as Sigmoid function. The correlation between $S_\mathcal{X}$ and $S_\mathcal{Y}$ is similar to that in \textbf{Theorem \ref{theorem1}}.
\begin{theorem}
	\label{theorem2}
	If multi-modal data $\mathcal{X}$ and $\mathcal{Y}$ are non-linearly correlated as in Equation (\ref{nonlinearC}), there exists a positive correlation between $S_\mathcal{X}(i,j)$ and $S_\mathcal{Y}(i,j) (i,j=1,2,\cdots,n)$.
\end{theorem}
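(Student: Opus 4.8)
The plan is to reduce the nonlinear case to \textbf{Theorem~\ref{theorem1}} by exploiting the two decisive properties of the logistic sigmoid: it is smooth and it is strictly monotone increasing, so its derivative $\sigma'=\sigma(1-\sigma)$ is strictly positive everywhere. The guiding intuition is that a strictly increasing, nearly affine transformation cannot reverse the sign of the correlation established for the purely linear map, so the positivity of $\rho_{S_\mathcal{X}S_\mathcal{Y}}$ should survive. I would make this precise by linearizing and then re-running the linear computation with an effective projection matrix.

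First I would linearize the mapping. Writing the transformation row-wise as $y_i=\sigma(x_iM+b)$ with $b$ the bias row of $B$, a first-order Taylor expansion about the bias gives
\begin{equation}
y_i \approx \sigma(b)+(x_iM)\odot\sigma'(b)=c+x_iM',
\end{equation}
where $c=\sigma(b)$ is a constant vector, $\odot$ is the Hadamard product, and $M'=M\,\mathrm{diag}(\sigma'(b))$ is an effective linear projection. Thus, to leading order, $\mathcal{Y}$ is an affine image of $\mathcal{X}$, and the random part entering $S_\mathcal{Y}(i,j)=y_iy_j^{T}$ is governed by the term $x_iM'(M')^{T}x_j^{T}$, which is exactly the quantity analyzed in \textbf{Theorem~\ref{theorem1}} with $M$ replaced by $M'$.

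Next I would transport the covariance and variance computations of \textbf{Theorem~\ref{theorem1}} verbatim onto $M'$. The key observation is that $M'(M')^{T}=M\,\mathrm{diag}(\sigma'(b))^{2}M^{T}$ is again a non-negative symmetric matrix, because $\mathrm{diag}(\sigma'(b))^{2}$ is a positive diagonal matrix (each $\sigma'>0$). Diagonalizing it as $P\varLambda P^{T}$ and repeating the calculation, the covariance reduces to $\sum_{\gamma}\lambda_\gamma\sum_{\varphi}(p_{\gamma\varphi}^{T})^{2}$ with the new eigenvalues, while $\sum_\gamma\lambda_\gamma=\operatorname{tr}\!\big(M'(M')^{T}\big)>0$ unless $M'$ is the zero matrix. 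The constant offset $c$ is a pure location shift and hence does not disturb the sign of the dominant covariance term, so the leading-order correlation is strictly positive, mirroring \textbf{Theorem~\ref{theorem1}}.

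The hard part will be controlling what the linearization discards. Under the standard-normal assumption on the entries of $\mathcal{X}$, the exact moments of $\sigma(x_iM+B)$ against a Gaussian have no closed form, so the quadratic and higher-order Taylor remainders cannot simply be integrated away. The cleanest way to make the argument rigorous is either to restrict attention to the near-linear operating regime of the sigmoid, where $|x_iM+B|$ is small enough that the remainder is dominated by the linear term, or to replace the moment computation by a monotonicity argument: since $\sigma$ is strictly increasing coordinate-wise, it preserves the ordering of $x_iM+b$ and $y_i$, so each eigendirection of $M'(M')^{T}$ contributes a covariance of the same sign as in the linear case. Establishing that these remainder terms never outweigh the positive leading term is the only substantive gap beyond a direct appeal to \textbf{Theorem~\ref{theorem1}}.
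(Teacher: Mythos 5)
Your route is genuinely different from the paper's. The paper disposes of Theorem~\ref{theorem2} in two sentences: it keeps the inner-product similarities of Equations~(\ref{simX}) and~(\ref{simY}), notes that $\mathcal{X}\mathcal{X}^T$ and $\mathcal{X}MM^T\mathcal{X}^T$ are correlated by Theorem~\ref{theorem1}, and then asserts that because $\sigma$ is monotone increasing, $\mathcal{X}\mathcal{X}^T$ is positively correlated with $\mathcal{Y}\mathcal{Y}^T$. There is no linearization, no effective projection matrix, and no moment computation in the nonlinear case. You instead Taylor-expand $\sigma$ about the bias, absorb $\sigma'(b)$ into an effective matrix $M'=M\,\mathrm{diag}(\sigma'(b))$, and rerun the Theorem~\ref{theorem1} calculation on $M'(M')^T$. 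What your approach buys is an explicit leading-order structure: it identifies exactly which positive-semidefinite matrix controls the covariance and why its trace is positive, and it honestly isolates the unresolved piece (the Taylor remainder under the Gaussian model). What the paper's approach buys is brevity, but at a real cost: coordinate-wise monotonicity of $\sigma$ acting on the vectors $y_i$ does not by itself imply that the bilinear quantity $y_iy_j^T=\sigma(x_iM+b)\,\sigma(x_jM+b)^T$ is an increasing function of $x_iMM^Tx_j^T$, so the paper's one-line inference is itself a gap, not a proof.

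That said, your argument as written is also not complete, and you say so yourself: the first-order expansion is only valid when $x_iM+b$ stays in the near-linear regime of the sigmoid, and under the standard-normal assumption on the entries of $\mathcal{X}$ the arguments of $\sigma$ are not small with high probability unless $\lVert M\rVert$ is. The quadratic and higher remainders contribute to both $\mathrm{Cov}(S_\mathcal{X},S_\mathcal{Y})$ and $D(S_\mathcal{Y})$, and nothing in the proposal bounds them against the leading term $\sum_\gamma\lambda_\gamma\sum_\varphi (p_{\gamma\varphi}^T)^2$. Your fallback monotonicity argument has the same weakness as the paper's: preservation of ordering along each coordinate does not transfer to preservation of the sign of the Pearson correlation between the two inner products. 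So neither you nor the paper closes the theorem rigorously; your version is the more transparent of the two because it at least names the quantity that would have to be controlled.
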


Same as \textbf{Theorem \ref{theorem1}}, we still use Equation (\ref{simX}) and (\ref{simY}) to compute $S_\mathcal{X}(i,j)$ and $S_\mathcal{Y}(i,j)$. Since $\mathcal{X}\mathcal{X}^T$ and $\mathcal{X}MM^T\mathcal{X}^T$ are correlated, and $\sigma$ is a monotonic increasing function, it is clear that $\mathcal{X}\mathcal{X}^T$is positively correlated to $\mathcal{Y}\mathcal{Y}^T$.

From \textbf{Theorem \ref{theorem1}} and \textbf{Theorem \ref{theorem2}}, whether $\mathcal{X}$ and $\mathcal{Y}$ are linearly or nonlinearly correlated to each other, $S_\mathcal{X}$ and $S_\mathcal{Y}$ are correlated to each other in most case. This kind of correlation would be utilized for measuring the similarity between cross-modal objects bellow. 

\subsection{Cross-modal projection and coreference resolution}\label{model}
Considering the positive correlation between $S_\mathcal{X}$ and $S_\mathcal{Y}$, since $\Re ^\mathcal{X}$ and $\Re ^\mathcal{Y}$ are the sub-matrices of $S_\mathcal{X}$ and $S_\mathcal{Y}$ respectively, they are positively correlated to each other, too. Taking a set of already matched objects as the shared reference set $O=\{(o_1^{x},o_1^{y}), \cdots, (o_i^{x},o_i^{y}), \cdots, (o_k^{x},o_k^{y}) \}$, $\Re ^\mathcal{X}$ and $\Re ^\mathcal{Y}$ would be isomorphic: the corresponding dimensions of $\Re ^\mathcal{X}$ and $\Re ^\mathcal{Y}$ are consistent in meaning and value\footnote{It should be noted that although the cardinality of reference set should be larger than the representation dimension to avoid confusion, it is difficult to achieve. In fact, slight confusion can reduce the possibility of missed detection. Here, we select a refernce set from all the matched objects with the strategy in Section \ref{ssr}}.

A linear mapping is performed on each dimension to eliminate the difference in the scales and distance metrics of $\Re ^\mathcal{X}$ and $\Re ^\mathcal{Y}$:
\begin{equation}
\Re ^\mathcal{X}M+B \rightarrow \Re ^\mathcal{Y},
\end{equation}
where $M$ is a diagonal matrix, $B$ is a bias matrix where each column is a constant. Since parameters of the projection are quite few, $M$ and $B$ can be learnt with a regression on the shared reference set $R$, the object is:
\begin{equation}
\min\,\sum_{\left( o_{i}^{x},o_{j}^{y} \right) \in O}{l\left( o_{i}^{x}M+b,o_{j}^{y} \right)}\,+\gamma g\left( M \right),
\end{equation}
where $l$ is the square loss function, $b\in B$ is the bias vector, $g$ is a regularization item.  

Although it is all right to take $\Re ^\mathcal{X}$ or $\Re ^\mathcal{Y}$ as the target space, visual feature space is more appropriate because it suffers less from the hubness problem\cite{Zhang:2017}. 

Giving an object $r_i^{x} \in \Re ^{\mathcal{X}}$, the $r_j^{y} \in \Re ^{\mathcal{Y}}$ that most closely matches it is that for which minimizes
\begin{equation}
dist(r_i^{x}M+b,r_j^{y})
\end{equation}

\section{Experiments}\label{experiment}
\subsection{Datasets}
We employ the following datasets to evaluate the performance of the proposed method: \textbf{Wikipedia}\cite{Rasiwasia:2010}, \textbf{Pascal-sentences}\cite{Rashtchian:2010}.

\textbf{Wikipedia}:This dataset contains 2866 pairs of images and text from ten categories. Each pair of image and text are extracted from Wikipedia's articles\cite{Rasiwasia:2010}. Instead of SIFT BoVW visual features and LDA textual features provided by\cite{Rasiwasia:2010}, we extract CNN fc6 features\footnote{CNN feature is extract with Caffe feature extraction tool.} and Sif features\footnote{Python codes can be obtained from: https://github.com/PrincetonML/SIF.} like in Section \ref{semantic}.

\textbf{Pascal-sentences}: This dataset is a subset of Pascal VOC, which contains 1000 pairs of image and corresponding textual description from twenty categories. Feature extraction are same as that of \textbf{Wikipedia} dataset.

\begin{figure*}[!ht]
	\centering
	\begin{tabular}{cc}
		\fbox{\includegraphics[width=7cm]{./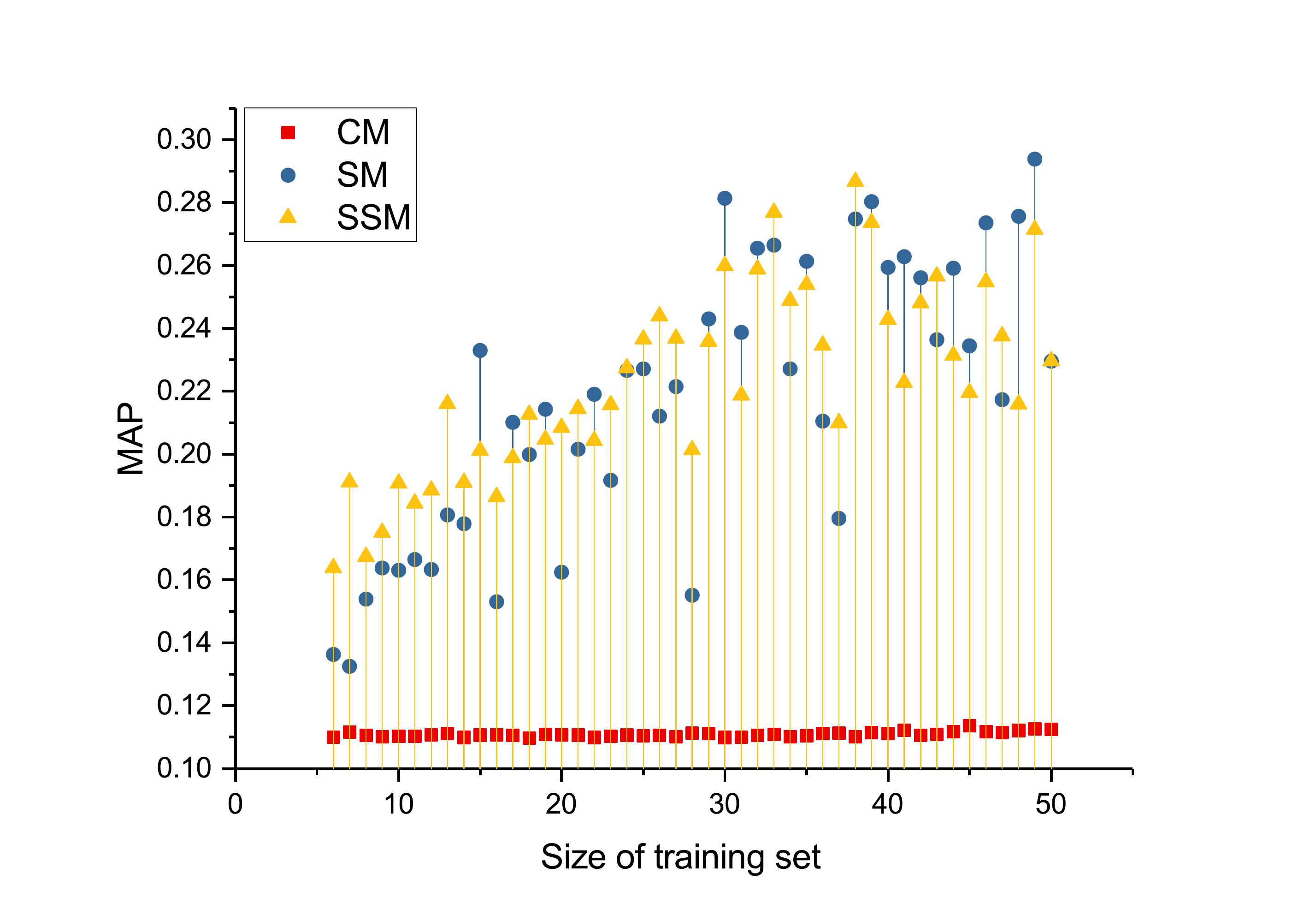}}&
		\fbox{\includegraphics[width=7cm]{./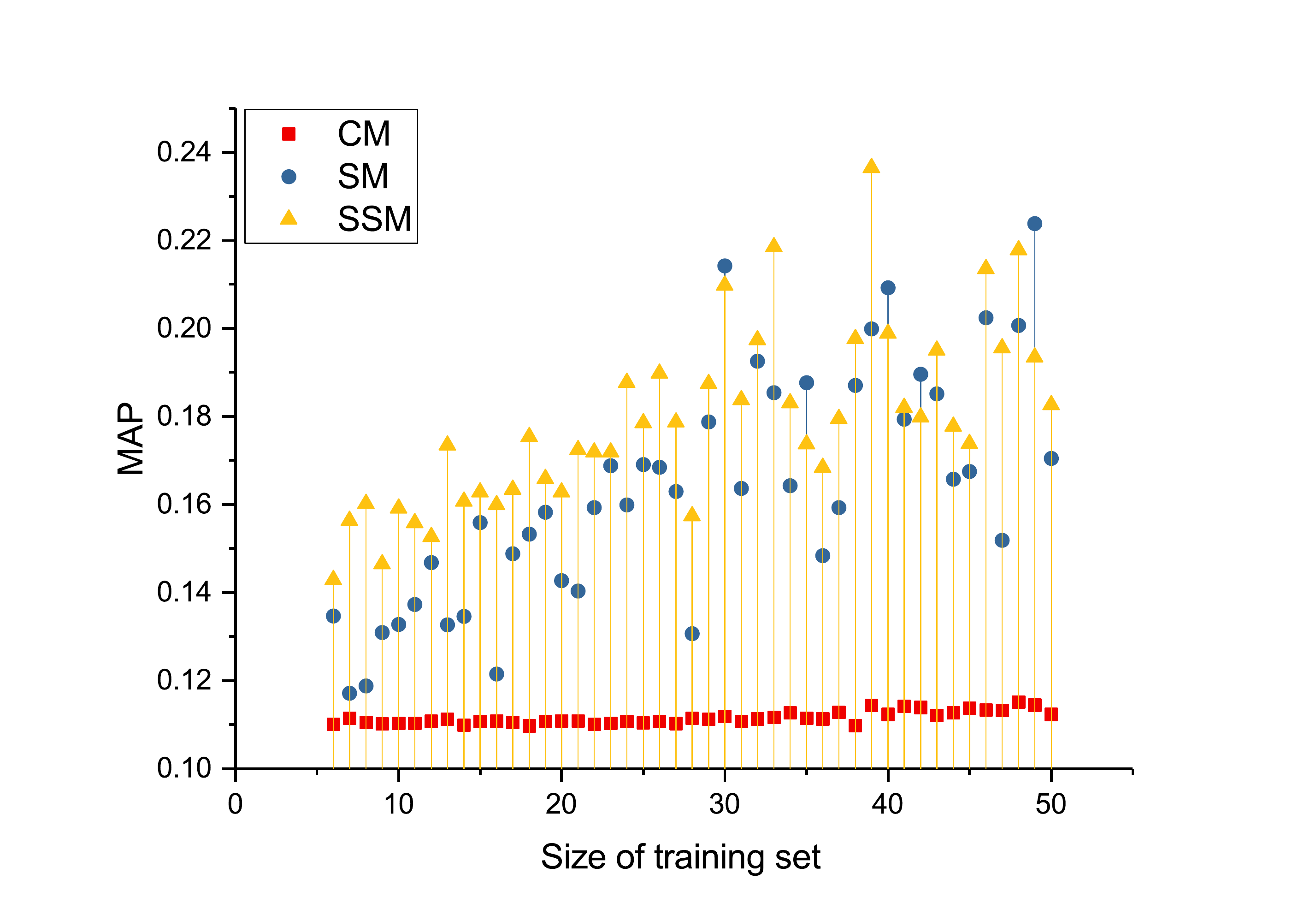}}\\
		(a)&(b)\\
	\end{tabular}
	\caption{The mAP score on Wikipedia dataset: (a) for image query, (b) for text query }\label{fig:wiki}
\end{figure*}
\begin{figure*}[!ht]
	\centering
	\begin{tabular}{cc}
		\fbox{\includegraphics[width=7cm]{./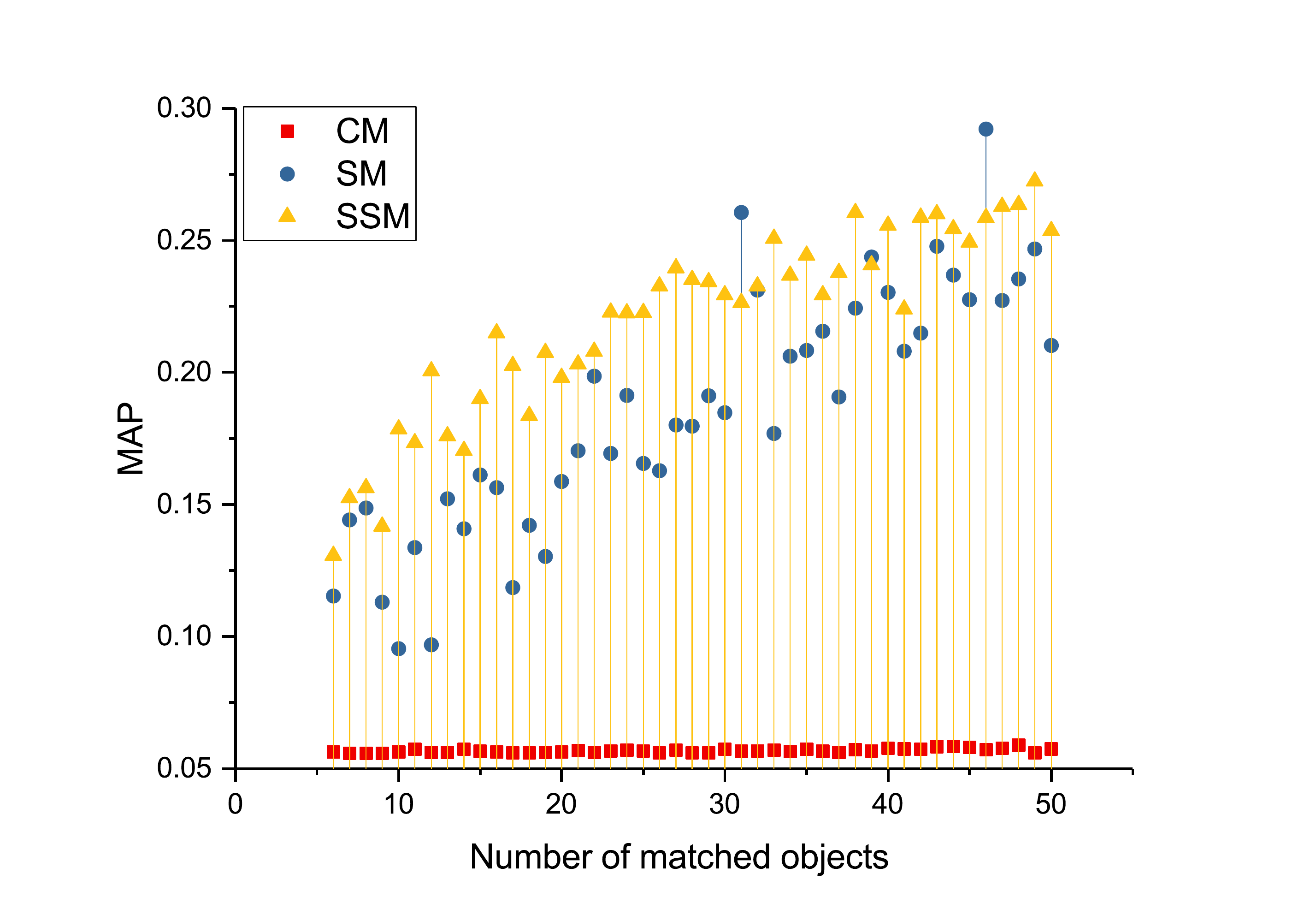}}&
		\fbox{\includegraphics[width=7cm]{./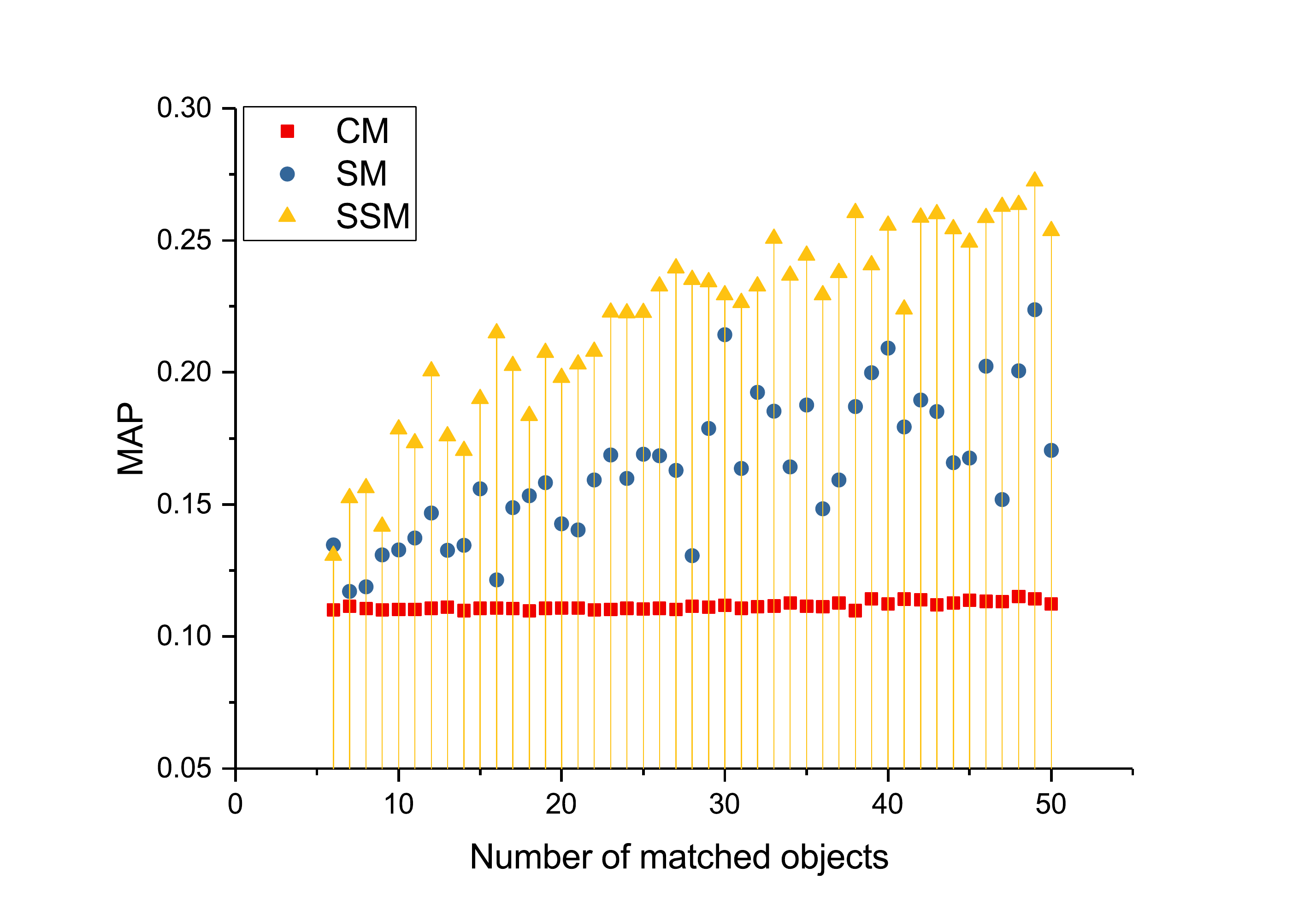}}\\
		(a)&(b)\\
	\end{tabular}
	\caption{The mAP score on Pascal-sentences dataset: (a) for image query, (b) for text query }\label{fig:pascal}
\end{figure*}
\subsection{Evaluation Protocol}
Two common methods for cross-modal coreference resolution are compared with our method(Space Structure Matching, SSM):

\textbf{Correlation Matching (CM)}\cite{Rasiwasia:2010}\cite{Wei:2017}: With canonical correlation analysis(CCA), Rasiwasia proposed to learn a shared space for different modalities where they were maximumly correlated, and projected cross-modal objects into it.

\textbf{Semantic Matching (SM)}\cite{Rasiwasia:2010}\cite{Wei:2017}: SM represent multi-modal data in more abstract levels, then they should be naturally corespondent to each other. Rasiwasia adopted multiclass logistic regression to generate common representations of multi-modal data.

Since our object is to perform coreference resolution with insufficient training data, we take some experiments to evaluate the cross-modal retrieval precision with very limited matched objects. A small subset(ranging 6 to 50) of the dataset is randomly selected as the training set. The trained model on the training set is tested on the left parts.

As in\cite{Rasiwasia:2010}\cite{Wei:2017}\cite{Gao:2018}, \textbf{mean average precision(mAP)} is employed to evaluated the performance of these models, which is a widely used metric of information retrieval\cite{Rasiwasia:2007}:
\begin{equation}
\text{mAP}=\frac{1}{\left| Q \right|}\sum_{i=1}^n{\text{AP}\left( i \right)},
\end{equation}
where $AP(i)$ is the average precision of test sample $i$. The multi-modal objects with same labels are considered semantically similar. For the coreference resolution of $x_i$, the average precision can be computed as:
\begin{equation}
\text{AP}\left( i \right) =\frac{1}{L_i}\sum_{k=1}^n{\text{P}\left( k \right) \delta \left( k \right)},
\end{equation}
where $L_i$ denotes the number of the really similar objects in the target modality, P($k$) is the precision of the results ranked at $k$, $\delta(k)=1$ if the object at rank is similar, $\delta(k)=0$ otherwise. Both the mAP scores of bidirectional queries and their average are computed, and higher mAP indicates better performance of a model.


\subsection{Performance evaluation}

We demonstrate the mAP score of CM, SM and SSM on Pascal-sentences and Wikipedia datasets in Fig. \ref{fig:wiki} and Fig. \ref{fig:pascal}. 

\subsubsection{Results on Wikipedia}
Fig. \ref{fig:wiki} reports the mAP scores on wikipedia dataset in both two retrieval direction: searching similar text for images and the reverse. In Fig. \ref{fig:wiki} (a), the mAP scores of SSM are higher than the others in most cases, and CM's scores are lowest all the time. Both the mAP scores of SSM and SM increase with the number of the matched objects, while that of CM are almost unchanged. The reason of poor performance of CM may be that the training sample is not enough for CCA to find significant correlation between modalities. It should be noticed that the mAP of SM increases rapidly, especially for the image query, and exceeds that of SSM when the size of training data increased to forty.

\subsubsection{Results on Pascal-sentences}
Fig. \ref{fig:pascal} shows the results on Pascal-sentences dataset. The result is pretty similar to that of Wikipedia dataset. In Fig. \ref{fig:pascal} (a), the mAP scores of SSM are also higher than the others in most cases, and CM's scores still keep lowest. Both the mAP scores of SSM and SM increase with the number of the matched objects, and the performance difference demonstrate a decreasing trend. Besides, with the same training set, the mAP scores of text query are higher than that of image query on Pascal-sentences dataset. And the performance of SSM method on Pascal-sentences is better than that on Wikipedia, which could be due to stronger correlation between space structures of Pascal-sentences.

Overall, the proposed SSM method performs better than two popular methods for multi-modal retrieval when training samples are not rich enough. However, the performance have not reached our expectation. The reason may lie on that our method discards the class label information, while it is the most important for results evaluation. There is still much room to improve the method. For example, the way that SM uses class label information may be helpful to improve the precision.

\section{Conclusion}
For cross-modal issues, abstraction and correlation are two useful tools. In this paper, We have demonstrated the possibility to find semantically similar objects from different modality with the correlation between space structures of different modalities, which can be considered as a combination of abstraction and correlation.
 
Firstly, we have proved that the space structures of different modalities are correlated, and employ this kind of correlation for multi-modal coreference resolution. To figure the space structure of each modality more accurately, high level features for image and text are employed. With these above, a semi-supervised method for coreference resolution have been proposed. Different from existing methods, the proposed method mainly utilize a intrinsic correlation between different modalities. It makes our method need pretty few training data to learn the correlation. The experiments on two multi-modal datasets have verified that our method outperformed previous methods when training data are insufficient.

\ifCLASSOPTIONcaptionsoff
  \newpage
\fi
\bibliographystyle{IEEEtran}
\bibliography{CR}

\end{document}